\documentclass[conference]{IEEEtran}
\IEEEoverridecommandlockouts
\usepackage{cite}
\usepackage{amsmath,amssymb,amsfonts}
\usepackage{graphicx}
\usepackage{textcomp}
\usepackage{xcolor}
\usepackage{booktabs}
\usepackage{multirow}
\usepackage{array}
\usepackage{url}
\usepackage{amsthm}
\usepackage{algorithm}
\usepackage{algpseudocode}
\usepackage{tikz}
\usepackage{pgfplots}
\pgfplotsset{compat=1.18}
\usetikzlibrary{arrows.meta,positioning,shapes.geometric,
                fit,backgrounds,calc,decorations.pathreplacing}

\newtheorem{proposition}{Proposition}
\newtheorem{corollary}{Corollary}

\def\BibTeX{{\rm B\kern-.05em{\sc i\kern-.025em b}\kern-.08em
    T\kern-.1667em\lower.7ex\hbox{E}\kern-.125emX}}

\newcommand{\eps}{\boldsymbol{\varepsilon}}
\newcommand{\Dc}{\boldsymbol{\Delta}}
\newcommand{\tileps}{\tilde{\boldsymbol{\varepsilon}}}
\newcommand{\norm}[1]{\left\|#1\right\|}
\newcommand{\dv}{\boldsymbol{\delta}}

\definecolor{s1col}{RGB}{14,116,172}
\definecolor{s2col}{RGB}{107,68,168}
\definecolor{s3col}{RGB}{210,100,20}
\definecolor{s4col}{RGB}{34,130,72}
\definecolor{s5col}{RGB}{16,130,130}

\begin{document}

\title{Closing the Null Space: Guidance-Aware\\
Quantization for Classifier-Free Diffusion}

\author{%
\IEEEauthorblockN{Abdullah Al Shafi,~Sumaiya Rahim Suma}
\IEEEauthorblockA{\textit{Department of Computer Science and Engineering}\\
\textit{Khulna University of Engineering \& Technology}\\
Khulna-9203, Bangladesh\\
\{abdullah.shafi99, sumaiya.rahim234\}@gmail.com}
}

\maketitle

\begin{abstract}
Deploying classifier-free guidance (CFG) diffusion models under
real-world compute budgets requires quantization, yet existing
post-training quantization (PTQ) methods treat CFG models as
single-branch networks, ignoring the paired conditional/unconditional
structure that CFG inference fundamentally relies on.
This structural blind spot has two consequences.
At the system level, the two-pass CFG execution pattern imposes a
latency overhead that parameter-count and bit-operation metrics
conceal entirely, and commodity INT8 inference stacks fail to
realize the theoretical efficiency gains that BOPs calculations
promise.
At the algorithmic level, calibrating against the guidance gap
alone admits an exact null space: a quantized model can achieve
perfect gap-fidelity diagnostics while the unconditional branch
drifts arbitrarily, corrupting every guided prediction at inference
time.
This paper terms this the \emph{branch-drift trap}, proves its
existence analytically, and confirms it empirically through a
false-positive result in which the best-calibrated model
by standard diagnostics simultaneously produces the worst sample
quality.
To close the trap, Guidance-Aware Mixed Precision (GAMP) is
proposed, which calibrates directly on the guided prediction,
derives per-layer activation-bit sensitivity from guided-output
degradation, and allocates bits via a greedy knapsack---provably
preventing unconditional branch drift by construction.
\end{abstract}

\begin{IEEEkeywords}
post-training quantization, classifier-free guidance, diffusion
models, mixed precision, branch-drift trap, null space,
inference efficiency
\end{IEEEkeywords}

\section{Introduction}
\label{sec:intro}

Diffusion models~\cite{ho2020ddpm} have become the dominant
framework for high-quality conditional image generation, with
classifier-free guidance (CFG)~\cite{ho2022cfg} providing the
primary control mechanism at inference time.
Deploying these models under memory and latency budgets motivates
post-training quantization (PTQ), which compresses a trained model
by quantizing its weights and activations without retraining.
CFG inference, however, differs fundamentally from the
unconditional generation setting that virtually all PTQ methods
assume: at every denoising step, the model executes \emph{two}
forward passes---one conditioned on a class label and one
on a null token---whose outputs are combined as
\begin{equation}
  \tileps = \eps_\emptyset + w(\eps_y - \eps_\emptyset), \quad w > 1.
  \label{eq:cfg}
\end{equation}
This dual-branch structure creates two deployment challenges that
the existing PTQ literature has not addressed.

The first is a measurement problem.
Efficiency metrics such as parameter counts and bit-operations (BOPs)
report single-pass costs, invisibly discounting the two-pass
per-step overhead that CFG imposes.
The same measurement gap affects INT8 deployment: BOPs-predicted
speedups assume hardware-matched inference stacks, but commodity
ONNX Runtime without TensorRT routes quantized operators through
fallback pathways that negate---and in practice reverse---the
theoretical gains.

The second is a calibration problem, and it is structural.
A natural PTQ objective for a CFG model is to calibrate the
guidance gap $\Dc = \eps_y - \eps_\emptyset$, since DASH's
distillation work~\cite{dash2026} demonstrates that preserving
$\Dc$ is central to generation quality.
This paper shows, however, that a gap-only calibration objective
admits an exact null space: there exist quantized models that
achieve perfect gap fidelity---near-ideal $\rho$ and
$\cos(\Dc)$ scores---while the unconditional branch drifts by an
unconstrained vector $\dv$, corrupting the guided prediction at
inference time.
This phenomenon is termed the \emph{branch-drift trap}.
Unlike the training-time setting in DASH, where an explicit
unconditional branch loss and hundreds of thousands of gradient
steps over unfrozen weights close this null space automatically,
PTQ calibrates only quantization thresholds over frozen weights in
a few hundred iterations---a regime in which no gradient mechanism
exists to suppress $\dv$.

To address both challenges, this paper makes four contributions.
\begin{enumerate}
\item \textit{CFG latency characterization}: the two-pass
  guided-inference overhead is measured empirically on an NVIDIA
  T4 GPU across batch sizes and floating-point formats, revealing
  a consistent 1.99$\times$ tax hidden by all standard efficiency
  proxies and diagnosing a precise software-stack root cause for
  the failure of commodity INT8 inference to realize theoretical
  speedups.
\item \textit{Null-space theorem and empirical confirmation}: the
  branch-drift trap is proved formally---showing that gap-only PTQ
  calibration admits an exact null space in which guided-prediction
  error is unconstrained---and confirmed empirically through a
  controlled ablation in which the best-performing model by gap
  diagnostics simultaneously produces the worst sample quality of
  any tested configuration.
\item \textit{Guidance-Aware Mixed Precision (GAMP)}: a PTQ method
  that calibrates on the guided prediction directly, closes the
  null space by construction, and allocates activation bits per
  layer according to their contribution to guided-output
  degradation---a CFG-aware sensitivity criterion that the existing
  single-branch mixed-precision literature does not capture.
\item \textit{Deployment guidelines}: concrete recommendations for
  HPEC practitioners on how to report efficiency, validate PTQ
  calibration, and allocate activation precision for CFG diffusion
  deployment.
\end{enumerate}

\section{Related Work}
\label{sec:related}

\textit{Post-training quantization for diffusion models.}
PTQ4DM~\cite{ptq4dm} established the feasibility of diffusion PTQ
by collecting multi-timestep calibration data to address time-varying
activation statistics.
Q-Diffusion~\cite{qdiffusion} introduced skip-connection partitioning
and block-wise reconstruction; for text-conditioned models it collects
features from both CFG branches, but its reconstruction objective is
per-block MSE on each branch independently, not the guided prediction
$\tileps = \eps_\emptyset + w\Dc$.
PTQD~\cite{ptqd} modeled quantization noise as a perturbation of the
diffusion SDE, proposing timestep-aware mixed-precision that targets
trajectory accumulation---a complementary error source to the
branch-level failure identified here.
TFMQ-DM~\cite{tfmqdm}, APQ-DM~\cite{apqdm}, and
TCAQ-DM~\cite{tcaqdm} introduced further refinements to temporal
feature maintenance and per-channel calibration.
Despite these advances, no existing method calibrates against
$\tileps$ directly: all treat the CFG model as a single-pass
unconditional network during calibration, leaving the branch-drift
trap structurally unaddressed.
The failure mode is architecture-agnostic---it depends only on the
linear combination in~\eqref{eq:cfg} and applies equally to
convolutional UNets and transformer-based DiT architectures.

Existing per-block MSE or per-tensor reconstruction losses, even when
fed with dual-branch features, remain gap-like objectives in the
terminal output space: they minimize branch-wise discrepancies
independently.
Because no explicit coupling term constrains $\dv$, the null space
of Proposition~\ref{prop:nullspace} remains reachable---a constant
branch offset propagates through the network and achieves low
per-block error on both branches simultaneously while severely
corrupting $\tileps$.

\textit{Mixed-precision allocation.}
BRECQ~\cite{brecq} and HAWQ~\cite{hawq} allocate bit-widths by
per-layer sensitivity derived from block-wise reconstruction error or
Hessian curvature---single-pass proxies that do not account for
CFG's dual-branch structure.
GAMP instead measures sensitivity through
$\norm{\tileps^q_l - \tileps^{fp}_l}^2$, the quantity CFG sampling
depends on directly, targeting the deployment-critical failure mode
rather than a reconstruction proxy.

\textit{CFG distillation and the DASH null space.}
DASH~\cite{dash2026} compresses a 35.8M-parameter CFG teacher to a
6.1M student via distillation, introducing $\rho$ and $\cos(\Dc)$
as guidance-fidelity diagnostics and proving a training-time null
space for gap matching under full gradient access.
The present work extends this analysis to the PTQ regime, where
frozen weights make the null space provably impossible to escape via
gap calibration alone, and demonstrates that the diagnostics DASH
correctly uses for distillation monitoring become
\emph{insufficient}---and actively misleading---as PTQ calibration
objectives.

\section{Method}
\label{sec:method}

\subsection{Preliminaries}
\label{sec:prelim}

At each of $K\!=\!50$ DDIM~\cite{ddim} steps, two forward passes
through the UNet yield $\eps_y(x_t,t)$ (conditional, class label
$y$) and $\eps_\emptyset(x_t,t)$ (unconditional, null token),
combined by~\eqref{eq:cfg}.
The guidance gap $\Dc = \eps_y - \eps_\emptyset$ carries the class
signal; DASH~\cite{dash2026} quantifies its preservation by
$\rho \!=\! \mathbb{E}[\|\Dc_q\|]/\mathbb{E}[\|\Dc_{fp}\|]$
(magnitude ratio; ideal~1) and $\cos(\Dc_q, \Dc_{fp})$
(directional alignment; ideal~1).

Quantization thresholds are optimized via AdaRound-style~\cite{nagel2020adaround}
learnable weight rounding with MSE-optimal activation clipping,
over 800~iterations using 1\,024 calibration images drawn uniformly
from the CIFAR-10 training set; network weights are frozen
throughout.
Weights are quantized per-output-channel (symmetric, signed),
providing scale granularity that substantially reduces clipping error
at W4; activations are quantized per-tensor (asymmetric, unsigned)
with MSE-optimal clipping thresholds.
Biases and normalization parameters remain at FP32.
Bit-operations $\text{BOPs}\!=\!\sum_l M_l w_b a_b$ ($M_l$: layer
MACs, $w_b$: weight bits, $a_b$: activation bits) serve as the
hardware-agnostic compute metric.
The first and last UNet convolutions (\texttt{conv\_in},
\texttt{conv\_out}) remain FP32, consistent with standard diffusion
PTQ practice~\cite{qdiffusion}.

\subsection{The Branch-Drift Trap: Null-Space Analysis}
\label{sec:theory}

The most natural PTQ calibration objective for a CFG model is to
minimize the guidance-gap reconstruction error:
\begin{equation}
  \mathcal{L}_{\mathrm{gap}} =
  \norm{(\eps_y^q - \eps_\emptyset^q)
        - (\eps_y^{fp} - \eps_\emptyset^{fp})}^2.
  \label{eq:lgap}
\end{equation}
This is well-motivated: $\mathcal{L}_\text{gap}$ directly penalizes
drift in $\Dc$, the quantity DASH identifies as the primary quality
determinant.
Yet it is provably insufficient.

\begin{proposition}[Null space of gap-only PTQ]
\label{prop:nullspace}
For any vector $\dv$, the assignment
$\eps_y^q = \eps_y^{fp} + \dv$ and
$\eps_\emptyset^q = \eps_\emptyset^{fp} + \dv$
achieves $\mathcal{L}_{\mathrm{gap}} = 0$, leaving the unconditional
branch drift $\dv$ entirely unconstrained.
\end{proposition}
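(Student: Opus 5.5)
The plan is direct substitution into~\eqref{eq:lgap}, followed by a second substitution into~\eqref{eq:cfg} to show what the null direction does to the guided prediction. I treat $\eps_y^q$ and $\eps_\emptyset^q$ as arbitrary outputs of the quantized model, evaluated at the same $(x_t,t)$ as their full-precision counterparts. The statement is then purely algebraic, and no property of the network or the quantizer is needed.

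\textbf{Step 1: the gap loss vanishes.} Insert the assignment $\eps_y^q=\eps_y^{fp}+\dv$ and $\eps_\emptyset^q=\eps_\emptyset^{fp}+\dv$ into~\eqref{eq:lgap}. The quantized gap becomes
\[
\Dc_q=(\eps_y^{fp}+\dv)-(\eps_\emptyset^{fp}+\dv)=\Dc_{fp},
\]
because the common offset cancels. Hence $\mathcal{L}_{\mathrm{gap}}=\norm{\Dc_q-\Dc_{fp}}^2=0$ for every $\dv$. As an immediate corollary, $\rho=1$ and $\cos(\Dc_q,\Dc_{fp})=1$ exactly, so both DASH diagnostics read as ideal.

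\textbf{Step 2: the drift is unconstrained.} The set of branch perturbations with zero gap loss is exactly the diagonal subspace $\{(\dv,\dv)\}$. This is the kernel of the linear map $(\mathbf{a},\mathbf{b})\mapsto\mathbf{a}-\mathbf{b}$, so it contains the shifts of Step~1 and nothing else. Within it, the unconditional error $\eps_\emptyset^q-\eps_\emptyset^{fp}=\dv$ can be any vector, so the objective places no constraint on it.

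\textbf{Step 3: consequence for the guided prediction.} Substituting into~\eqref{eq:cfg} gives
\[
\tileps^q=\eps_\emptyset^{fp}+\dv+w\Dc_{fp}=\tileps^{fp}+\dv,
\]
so the guided error $\norm{\tileps^q-\tileps^{fp}}=\norm{\dv}$ is unbounded, even though $\mathcal{L}_{\mathrm{gap}}=0$.

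There is no real technical obstacle. The only subtle point is interpretive: the proposition claims that such an assignment \emph{exists} in output space, not that a particular quantizer will realize it. I will state this explicitly, and note that realizability, for example through a shared bias-like offset propagated to both branches, is an empirical question addressed separately.
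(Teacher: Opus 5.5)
Your Step~1 is the paper's proof: substitute the shifted branches into~\eqref{eq:lgap} and observe that the common offset $\dv$ cancels in the gap, so $\mathcal{L}_{\mathrm{gap}}=0$ for every $\dv$. Your Steps~2 and~3 are correct additions but not needed for this statement: Step~2 shows that zero gap loss forces a common shift, and Step~3 gives the guided-error identity that the paper states separately as Proposition~\ref{prop:error}.
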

\begin{proof}
$(\eps_y^q - \eps_\emptyset^q)
 = (\eps_y^{fp}+\dv) - (\eps_\emptyset^{fp}+\dv)
 = \eps_y^{fp} - \eps_\emptyset^{fp}$,
so $\mathcal{L}_{\mathrm{gap}} = 0$ for any $\dv$.
\end{proof}

The consequence for sample quality follows immediately.

\begin{proposition}[Guided-prediction error under null-space drift]
\label{prop:error}
Under any null-space solution with drift $\dv$:
\begin{equation}
  \tileps^q - \tileps^{fp} = \dv,
  \label{eq:drift}
\end{equation}
independent of guidance scale $w$ and regardless of how well the gap
is calibrated.
\end{proposition}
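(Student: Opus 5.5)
The plan is to substitute the null-space parametrization of Proposition~\ref{prop:nullspace} into the CFG combination~\eqref{eq:cfg} and use linearity. Both the full-precision and quantized guided predictions use the same rule:
\[
\tileps^{fp} = \eps_\emptyset^{fp} + w(\eps_y^{fp} - \eps_\emptyset^{fp}), \qquad
\tileps^{q} = \eps_\emptyset^{q} + w(\eps_y^{q} - \eps_\emptyset^{q}),
\]
with the same guidance scale $w$ and the same inputs $(x_t,t)$. Comparing the two at a common input is what makes the difference well defined.

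The first step is to rewrite $\tileps^q$ in terms of $\dv$. By the computation in the proof of Proposition~\ref{prop:nullspace}, the quantized gap equals the full-precision gap, $\eps_y^q - \eps_\emptyset^q = \eps_y^{fp} - \eps_\emptyset^{fp}$. So the $w$-weighted term in $\tileps^q$ is identical to the corresponding term in $\tileps^{fp}$. The only remaining change is the unconditional term, $\eps_\emptyset^q = \eps_\emptyset^{fp} + \dv$.

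The second step is to subtract:
\[
\tileps^q - \tileps^{fp} = (\eps_\emptyset^{fp} + \dv) - \eps_\emptyset^{fp} + w\cdot 0 = \dv .
\]
This yields~\eqref{eq:drift}. Since $w$ multiplies only the gap term, which cancels exactly, the result holds for every $w$. It also holds for every value of $\mathcal{L}_{\mathrm{gap}}$ in the sense intended: the gap term is exactly matched, yet the guided error is still $\dv$.

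There is no genuine obstacle here; the argument is a one-line algebraic identity. The only point needing care is the interpretation. The equality is pointwise in $(x_t,t)$ at a single denoising step, for a drift $\dv$ that may itself depend on $(x_t,t)$. It is not a statement about trajectories, along which errors would compound across the $K$ DDIM steps. I would add a remark making this explicit, and also note the contrast that $\norm{\tileps^q - \tileps^{fp}} = \norm{\dv}$ is unbounded while $\mathcal{L}_{\mathrm{gap}} = 0$.
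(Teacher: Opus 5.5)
Your proposal is correct and matches the paper's proof: substitute $\Dc^q = \Dc^{fp}$ (from Proposition~\ref{prop:nullspace}) and $\eps_\emptyset^q = \eps_\emptyset^{fp} + \dv$ into~\eqref{eq:cfg}, so the $w$-weighted gap terms cancel and only $\dv$ remains. Your extra remark, that the identity holds pointwise at a single denoising step (with $\dv$ possibly depending on $(x_t,t)$) and is not a trajectory-level statement, is a useful clarification the paper leaves implicit.
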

\begin{proof}
Substitute $\Dc^q \!=\! \Dc^{fp}$ (Proposition~\ref{prop:nullspace})
and $\eps_\emptyset^q \!=\! \eps_\emptyset^{fp} + \dv$
into~\eqref{eq:cfg}:
$\tileps^q - \tileps^{fp}
 = [(\eps_\emptyset^{fp}+\dv) + w\Dc^{fp}]
   - [\eps_\emptyset^{fp} + w\Dc^{fp}]
 = \dv$.
\end{proof}

\begin{corollary}[Gap-fidelity metrics are insufficient]
\label{cor:metrics}
$\rho\!=\!1$ and $\cos(\Dc_q,\Dc_{fp})\!=\!1$ are simultaneously
achievable with $\norm{\dv} \gg 0$.
At each of the $K$ DDIM steps, Proposition~\ref{prop:error}
contributes $\norm{\dv}$ of per-step error; over the full trajectory,
$\ell_2$ error accumulates as $\mathcal{O}(K\norm{\dv})$.
Guidance-gap metrics are \emph{necessary but not sufficient} for
PTQ sample quality under CFG.
\end{corollary}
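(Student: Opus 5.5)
The corollary has three claims, and I would establish them in order.

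\textbf{Step 1: exhibit the construction.} Fix any nonzero vector $\dv$, for instance $\dv = c\,\mathbf{u}$ with $\mathbf{u}$ a unit vector and $c>0$ arbitrarily large. Take the null-space assignment of Proposition~\ref{prop:nullspace}. Then $\Dc_q = \Dc_{fp}$ holds pointwise, for every $(x_t,t,y)$. Two consequences follow directly. First, $\rho = \mathbb{E}[\|\Dc_q\|]/\mathbb{E}[\|\Dc_{fp}\|] = 1$, provided $\mathbb{E}[\|\Dc_{fp}\|]>0$. This is a mild nondegeneracy assumption I would state explicitly: the full-precision model actually responds to the class label. Second, $\cos(\Dc_q,\Dc_{fp}) = \|\Dc_{fp}\|^2/\|\Dc_{fp}\|^2 = 1$ wherever $\Dc_{fp}\neq 0$. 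Since $c$ is free, $\norm{\dv}\gg 0$ is compatible with both metrics being ideal. This gives the first sentence of the corollary.

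\textbf{Step 2: per-step and accumulated error.} By Proposition~\ref{prop:error}, at each step the guided error is exactly $\tileps^q-\tileps^{fp}=\dv$, so the per-step error norm is $\norm{\dv}$. For accumulation, I would write the DDIM update as $x_{t_{k-1}} = a_k x_{t_k} + b_k \tileps(x_{t_k},t_k)$. I would then compare two trajectories from the same $x_T$: the exact full-precision one, and one that uses $\tileps^{fp}+\dv$ at every step. I would run the comparison in the idealized case where the drift is the same $\dv$ along the perturbed trajectory, which is the setting of the proposition. Unrolling the update gives a deviation of $\sum_k \bigl(\prod_{j<k} a_j\bigr) b_k \dv$, plus terms arising from evaluating $\tileps^{fp}$ at the perturbed states. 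Bounding the coefficients $|a_j|$ and $|b_k|$ uniformly over the fixed $K$-step schedule, and treating $\tileps^{fp}$ as Lipschitz with a constant absorbed into the big-$\mathcal{O}$, the triangle inequality yields a total deviation of at most $C K\norm{\dv}$. That is the $\mathcal{O}(K\norm{\dv})$ claim.

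\textbf{Step 3: logical conclusion.} Necessity is only heuristic here, motivated by DASH, and is not part of what is proved. Insufficiency follows from Steps 1--2: the gap metrics take their ideal values while the guided error, and hence sample quality, is unbounded in $\norm{\dv}$.

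The main obstacle is Step 2. A rigorous $\mathcal{O}(K\norm{\dv})$ bound requires controlling how state perturbations feed back through $\tileps^{fp}$. With a Lipschitz constant $L$, the naive Gr\"onwall-style bound gives $\norm{\dv}\sum_k (1+L b)^k$, which is only linear in $K$ when $L\,b\,K$ is bounded. I would first try to present the statement as an upper bound for a fixed schedule, with constants depending on the schedule. If that fails, I would weaken it to the first-order claim: the $K$ per-step injections of size $\norm{\dv}$ add linearly, since they are all aligned with the same direction $\dv$.
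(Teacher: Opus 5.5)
Your proposal follows the paper's route. The paper gives no separate proof: it reads the corollary directly off Proposition~\ref{prop:nullspace} ($\Dc_q=\Dc_{fp}$ pointwise, so $\rho=\cos(\Dc_q,\Dc_{fp})=1$ for arbitrary $\dv$) and Proposition~\ref{prop:error}, and it justifies the $\mathcal{O}(K\norm{\dv})$ accumulation only at the level of your fallback, by summing $K$ per-step contributions of size $\norm{\dv}$; your nondegeneracy caveat and your note that necessity is not proved are accurate additions. If you want the trajectory version, your Gr\"onwall obstacle is harmless for DDIM, since it is an Euler discretization of a fixed-horizon ODE: the products $\prod_j a_j$ telescope to a schedule-dependent constant and $b_k=\mathcal{O}(1/K)$, so $LbK$ stays bounded and the state deviation is $\mathcal{O}(\norm{\dv})$ uniformly in $K$, hence a fortiori $\mathcal{O}(K\norm{\dv})$.
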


PTQ provides no mechanism to suppress this drift.
In DASH's distillation setting~\cite{dash2026}, an explicit
unconditional branch loss
$\mathcal{L}_{un}\!=\!\norm{\eps_\emptyset^{\text{student}} - \eps_\emptyset^{\text{teacher}}}^2$
applied over $>$200K gradient iterations on unfrozen weights directly
suppresses $\dv$.
In PTQ, threshold optimization has only $\mathcal{O}(L)$ degrees of
freedom (one clipping range per quantizable layer), and the gap loss
reaches $\mathcal{L}_{\mathrm{gap}}\!\approx\!0$ trivially by
shifting both branch outputs by a common offset without genuinely
aligning either branch to its full-precision reference.
The branch-drift trap is therefore a structural property of any
gap-only PTQ objective, not an optimization pathology.

At larger guidance scales ($w\!\sim\!7$--15 in text-to-image
generation), the guided-prediction magnitude grows as
$|\tileps| \propto w\|\Dc\|$ while $\dv$ remains fixed in absolute
magnitude by~\eqref{eq:drift}, making the drift a proportionally
larger directional contamination of the sampling score.
The branch-drift trap is therefore expected to intensify at
production guidance scales.

Propositions~\ref{prop:nullspace}--\ref{prop:error} jointly imply
that the PTQ objective must constrain the guided prediction
$\tileps = \eps_\emptyset + w\Dc$ directly.
The per-layer guided-output error is defined as:
\begin{equation}
  \mathcal{E}(l) = \norm{\tileps^q_l - \tileps^{fp}_l}^2,
  \label{eq:guide_err}
\end{equation}
where $l$ indexes a quantizable \texttt{Conv2d} or \texttt{Linear}
layer; $\tileps^q_l$ is the full-network guided output when layer
$l$'s quantized thresholds are active---no intermediate feature-map
hooks are required.
$\mathcal{E}(l)\!=\!0$ requires $\dv\!=\!0$, closing the null space
by construction.
GAMP uses $\mathcal{E}(l)$ as both its per-layer sensitivity signal
and the basis of its calibration objective.

\subsection{Guidance-Aware Mixed Precision (GAMP)}
\label{sec:gamp}

Activations are substantially more sensitive than weights to bit-reduction in CFG models; GAMP exploits this asymmetry by fixing all weights to 4~bits and allocating activation bits per layer, concentrating precision where the guided output is most sensitive.
The calibration objective is:
\begin{equation}
  \mathcal{L}_{\mathrm{GAMP}} = \norm{\tileps^q - \tileps^{fp}}^2.
  \label{eq:gamp_loss}
\end{equation}
Algorithm~\ref{alg:gamp} and Fig.~\ref{fig:gamp_flow} give the full procedure.

\begin{algorithm}[t]
\caption{GAMP: Guidance-Aware Mixed Precision}
\label{alg:gamp}
\begin{algorithmic}[1]
\Require Frozen quantized network $\mathcal{M}$;
  calibration set $\mathcal{D}$ (1\,024 images);
  quantizable layers $\mathcal{L}$, $|\mathcal{L}|\!=\!49$;
  target avg.\ activation bits $\bar{b}$; guidance scale $w$
\Ensure Activation-bit assignments $\{A_l\}$; calibrated thresholds
\State Run $\mathcal{D}$ through both conditional and unconditional
  branches; record per-layer activation statistics
\State Cache $\tileps^{fp} \!\leftarrow\! \eps_\emptyset^{fp} +
  w(\eps_y^{fp}\!-\!\eps_\emptyset^{fp})$ for all $x\!\in\!\mathcal{D}$
\State Set $A_l\!\leftarrow\!8\;\forall l$; compute
  $\mathcal{E}_\text{base}
  \!\leftarrow\!
  \tfrac{1}{|\mathcal{D}|}\sum\norm{\tileps^q - \tileps^{fp}}^2$
\For{each layer $l \in \mathcal{L}$}
  \State $A_l \leftarrow 4$ temporarily
  \State $s_l \leftarrow \mathcal{E}_l^{A4} - \mathcal{E}_\text{base}$
         \hfill\textit{// guided-output sensitivity}
  \State $A_l \leftarrow 8$
\EndFor
\State Sort $\mathcal{L}$ descending by $s_l\,/\,\Delta\text{BOPs}(l)$
\State $A_l \leftarrow 4\;\forall l$
\While{$\bar{A} < \bar{b}$ \textbf{and} layers remain}
  \State Promote next-ranked layer: $A_l \leftarrow 8$
\EndWhile
\State Calibrate thresholds via $\mathcal{L}_\text{GAMP}$
  (Eq.~\eqref{eq:gamp_loss}), 800 Adam iterations
\State \Return $\{A_l\}$, calibrated thresholds
\end{algorithmic}
\end{algorithm}

Dual-branch calibration (line~1) prevents the clipping
underestimation that arises when calibrating only on unconditional
passes, since the conditional branch has systematically wider
activation ranges.
Sensitivity profiling (lines~3--7) requires $L\!=\!49$ additional
forward passes ($\approx$5~min on T4) and is a one-time offline
cost.
The greedy knapsack (lines~9--13) is near-optimal when per-layer
BOPs increments are small relative to the total budget; each
A4$\to$A8 promotion adds at most $\sim$1~G BOPs against a $\sim$32~G
total, satisfying this condition.
Calibrating with $\mathcal{L}_\text{GAMP}$ eliminates the
guided-prediction corruption that defines the branch-drift trap:
$\mathcal{L}_\text{GAMP}\!=\!0$ drives
$\norm{\tileps^q\!-\!\tileps^{fp}}\!\to\!0$, which by
Proposition~\ref{prop:error} removes the shared-drift null space
of Proposition~\ref{prop:nullspace}.

\begin{figure*}[t]
\centering
\resizebox{\linewidth}{!}{%
\begin{tikzpicture}[
  font=\small,
  s1/.style={rectangle, rounded corners=4pt,
             draw=s1col, line width=1.4pt, fill=s1col!12,
             text centered, minimum height=1.32cm,
             minimum width=2.72cm, text width=2.60cm},
  s2/.style={rectangle, rounded corners=4pt,
             draw=s2col, line width=1.4pt, fill=s2col!10,
             text centered, minimum height=1.32cm,
             minimum width=2.72cm, text width=2.60cm},
  s3/.style={rectangle, rounded corners=4pt,
             draw=s3col, line width=1.4pt, fill=s3col!12,
             text centered, minimum height=1.32cm,
             minimum width=2.72cm, text width=2.60cm},
  s4/.style={rectangle, rounded corners=4pt,
             draw=s4col, line width=1.4pt, fill=s4col!12,
             text centered, minimum height=1.32cm,
             minimum width=2.72cm, text width=2.60cm},
  s5/.style={rectangle, rounded corners=4pt,
             draw=s5col, line width=1.4pt, fill=s5col!12,
             text centered, minimum height=1.32cm,
             minimum width=2.72cm, text width=2.60cm},
  outb/.style={rectangle, rounded corners=4pt,
               draw=s5col!45, line width=1.0pt, fill=s5col!5,
               text centered, minimum height=1.32cm,
               minimum width=2.72cm, text width=2.60cm},
  a1/.style={rectangle, rounded corners=2pt,
             draw=s1col!35, fill=s1col!6, text centered,
             font=\scriptsize, minimum height=0.52cm,
             minimum width=2.72cm, text width=2.60cm},
  a2/.style={rectangle, rounded corners=2pt,
             draw=s2col!35, fill=s2col!6, text centered,
             font=\scriptsize, minimum height=0.52cm,
             minimum width=2.72cm, text width=2.60cm},
  a3/.style={rectangle, rounded corners=2pt,
             draw=s3col!35, fill=s3col!6, text centered,
             font=\scriptsize, minimum height=0.52cm,
             minimum width=2.72cm, text width=2.60cm},
  a4/.style={rectangle, rounded corners=2pt,
             draw=s4col!35, fill=s4col!6, text centered,
             font=\scriptsize, minimum height=0.52cm,
             minimum width=2.72cm, text width=2.60cm},
  a5/.style={rectangle, rounded corners=2pt,
             draw=s5col!60, line width=1pt, fill=s5col!8,
             text centered, font=\scriptsize,
             minimum height=0.65cm, minimum width=2.72cm,
             text width=2.60cm},
  arr/.style={-{Stealth[length=5.5pt,width=4pt]},
              very thick, black!55},
]

\node[s1] (n1) at ( 0.00,0)
  {{\color{s1col}\textbf{Step 1}}\\[2pt]Dual-branch\\calibration};
\node[s2] (n2) at ( 3.15,0)
  {{\color{s2col}\textbf{Step 2}}\\[2pt]Cache $\tileps^{fp}$\\guided outputs};
\node[s3] (n3) at ( 6.30,0)
  {{\color{s3col}\textbf{Step 3}}\\[2pt]Per-layer\\sensitivity $s_l$};
\node[s4] (n4) at ( 9.45,0)
  {{\color{s4col}\textbf{Step 4}}\\[2pt]Greedy bit\\allocation};
\node[s5] (n5) at (12.60,0)
  {{\color{s5col}\textbf{Step 5}}\\[2pt]Threshold\\calibration};
\node[outb](no) at (15.75,0)
  {Mixed-precision\\model\\(W4, non-unif.\,$A$)};

\foreach \a/\b in {n1/n2, n2/n3, n3/n4, n4/n5, n5/no}
  \draw[arr] (\a) -- (\b);

\node[a1, below=0.38cm of n1]
  {Prevents clipping\\underestimation};

\node[a2, below=0.38cm of n2]
  {One-time FP32\\reference cache};

\node[a3, below=0.38cm of n3]
  {49 fwd passes\\$\approx$5\,min on T4};

\node[a4, below=0.38cm of n4]
  {Rank by $s_l/\Delta\text{BOPs}$\\0/1 knapsack, $\bar{A}\!=\!\bar{b}$};

\node[a5, below=0.38cm of n5]
  {$\mathcal{E}(l)\!\to\!0$\\$\Rightarrow\dv\!\to\!0$\\
   null space closed};

\draw[decorate,
      decoration={brace,amplitude=5pt,raise=5pt},
      black!35, thick]
  (n3.north west) -- (n5.north east)
  node[midway, above=11pt, font=\scriptsize, text=black!50]
  {GAMP-specific offline overhead: $\approx$22\,min on T4};

\end{tikzpicture}}%
\caption{GAMP calibration pipeline (colors: blue~=~calibration,
  purple~=~cache, orange~=~sensitivity, green~=~allocation,
  teal~=~calibrate; output box teal-tinted).
  Dual-branch calibration (Step~1) captures the full activation
  range of both branches, preventing the clipping underestimation
  of single-pass calibration.
  Sensitivity profiling (Step~3) measures each layer's contribution
  to guided-prediction error under A4, grounded in
  $\mathcal{E}(l)$ of Eq.~\eqref{eq:guide_err}.
  Greedy allocation (Step~4) promotes layers A4$\to$A8 by
  sensitivity-per-BOPs until the target budget $\bar{b}$ is met.
  Calibrating via $\mathcal{L}_\text{GAMP}$ (Step~5, teal) removes
  the shared-drift null space, closing the branch-drift trap.
  Overhead is one-time; inference cost equals standard
  mixed-precision PTQ.}
\label{fig:gamp_flow}
\end{figure*}

\section{Experiments}
\label{sec:experiments}

\subsection{Setup}
\label{sec:setup}

The branch-drift trap established in Propositions~\ref{prop:nullspace}
and~\ref{prop:error} is a structural algebraic consequence
of~\eqref{eq:cfg}, independent of dataset size, image resolution, or
label-space cardinality: the null-space argument holds for any CFG
model that evaluates two forward passes and combines them linearly.
Likewise, the ONNX Runtime INT8 bottleneck (Section~\ref{sec:int8})
is governed by operator-mapping rules within the software stack, not
by image content.
CIFAR-10 therefore provides a sufficient and
computationally controlled setting for isolating these structural and
system-level behaviors without confounding data-scale variables.

\textit{Model and data.}
The publicly released DASH student
checkpoints~\cite{dash2026} are used: a 6.1M-parameter ADM
UNet~\cite{dhariwal2021} (base channels~64, one residual block per
scale, attention at $16\!\times\!16$) trained on CIFAR-10,
$32\!\times\!32$~RGB.
FP32 evaluation on a single-seed 50k-sample protocol yields
FID~13.67; the DASH paper reports 8.87 as a multi-seed mean from its
own training pipeline.
All PTQ comparisons use the same single-seed protocol; the
single-seed variance ($\pm$2--3 FID at 50k samples) is small
relative to the differences reported.

\textit{Baselines.}
(i)~\textsc{RTN}: round-to-nearest, no calibration;
(ii)~\textsc{MSE-cal}: AdaRound rounding with MSE activation
clipping, dual-branch calibration, uniform precision;
(iii)~\textsc{Gap-only}: gap objective~\eqref{eq:lgap} as the sole
calibration target---the critical ablation validating
Corollary~\ref{cor:metrics};
(iv)~\textsc{GAMP} (ours) at $\bar{b}\!\in\!\{5,6\}$ average
activation bits.

\textit{Metrics.}
FID and IS (50k samples, \texttt{torch-fidelity},
\texttt{cifar10-train} reference)~\cite{fid,is};
guidance-gap diagnostics $\rho$, $\cos(\Dc)$~\cite{dash2026};
BOPs per guided step ($=\!2\!\times$ single-pass BOPs);
theoretical packed model size (MB).

\textit{Hardware.}
The NVIDIA T4 (16~GB) is chosen deliberately: as the most widely
deployed cost-effective cloud and edge inference accelerator, it
ensures the 147$\times$ software-stack gap diagnosed in
Section~\ref{sec:int8} reflects conditions faced by budget-constrained
practitioners, not an unusual hardware configuration.
Timing uses PyTorch \texttt{cuda.Event} (20~reps, 8~warmup); INT8 measurements use ONNX Runtime~1.18 (CUDA EP, no TensorRT).

\subsection{CFG Latency Characterization}
\label{sec:cfgtax}

\begin{table}[t]
\caption{Measured 50-step CFG-DDIM latency on T4.
Guided\,=\,2 UNet passes/step; unguided\,=\,1.
Mem.\ is peak activation memory
(\texttt{torch.cuda.max\_memory\_allocated}, reset before each
forward pass; excludes CUDA context).}
\label{tab:profiling}
\centering
\resizebox{\columnwidth}{!}{%
\renewcommand{\arraystretch}{1.15}
\begin{tabular}{llrrrr}
\toprule
Prec. & $B$ & Guided (ms) & Unguided (ms) & Tax & Mem (MB) \\
\midrule
FP32 & 1  & 1042.6 & 534.2  & 1.95$\times$ & 594  \\
FP32 & 16 & 1106.4 & 553.6  & 2.00$\times$ & 685  \\
FP32 & 64 & 3936.2 & 1988.5 & 1.98$\times$ & 1034 \\
\midrule
FP16 & 1  & 1504.7 & 761.3  & 1.98$\times$ & 40   \\
FP16 & 16 & 1479.0 & 744.5  & 1.99$\times$ & 108  \\
FP16 & 64 & 3107.7 & 1569.8 & 1.98$\times$ & 322  \\
\bottomrule
\end{tabular}}
\end{table}

Table~\ref{tab:profiling} shows a consistent
$\approx$\textbf{1.99$\times$} CFG overhead across all batch sizes
and floating-point formats.
Fig.~\ref{fig:cfg_tax} visualizes the overhead; the shaded gap is
the pure guidance cost, invisible to all BOPs and parameter-count
metrics.
A W4A8 quantized model reporting 32.7~G guided-step BOPs carries an
implicit additional 32.7~G BOPs of guidance overhead that no prior
compression paper credits; reporting single-pass metrics overstates
the realized deployment gain by a factor of two.
FP16 incurs higher latency at batch~1 ($+$44\% vs.\ FP32) due to
kernel-launch overhead but is $1.3\times$ faster at batch~64
(31 vs.\ 39~ms), while cutting peak memory by $3.2\times$
(322 vs.\ 1034~MB).

\begin{figure}[t]
  \centering
  \includegraphics[width=\columnwidth]{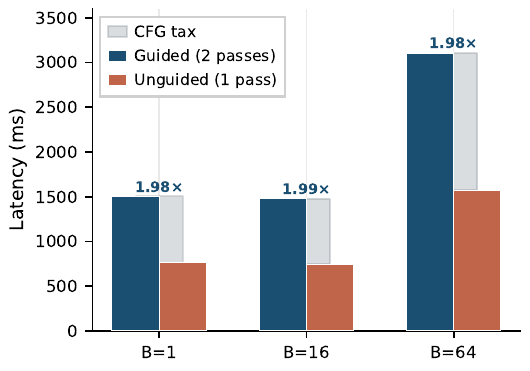}
  \caption{CFG tax: guided vs.\ unguided 50-step latency (FP16, T4)
    across batch sizes 1, 16, 64. Shaded region is the pure guidance
    overhead ($\approx$1.99$\times$), invisible to all BOPs and
    parameter-count metrics.}
  \label{fig:cfg_tax}
\end{figure}

\subsection{INT8 Deployment Gap: Software-Stack Diagnosis}
\label{sec:int8}

\begin{table}[t]
\caption{Single-pass BOPs and latency on T4 ($B\!=\!64$), without
TensorRT. Guided step $=2\times$ these values.
ORT INT8: $4598/31.2\!\approx\!\mathbf{147\times}$ slower than
PyTorch FP16.}
\label{tab:int8}
\centering
\resizebox{\columnwidth}{!}{%
\renewcommand{\arraystretch}{1.15}
\begin{tabular}{lrrrr}
\toprule
Path & Size (MB) & BOPs$_{1p}$\,(G) &
  Theo.\,$\uparrow$ & Lat.$_{1p}$\,(ms) \\
\midrule
PyTorch FP32        & 24.3 & 525  & 1.0$\times$ & 39.4  \\
PyTorch FP16        & 12.2 & 525  & 1.0$\times$ & \textbf{31.2}  \\
\midrule
ORT FP32            & 24.5 & --   & --          & 47.8  \\
ORT INT8 (no TRT)   & \textbf{6.4} & \textbf{32.8} &
  \textbf{16$\times$} & \textbf{4598} \\
\bottomrule
\end{tabular}}
{\scriptsize\raggedright
ORT INT8 is $3.8\times$ smaller on disk (6.4 vs.\ 24.5~MB):
weight quantization succeeds; the bottleneck is inference-stack
support. This 6.4~MB is the exported ONNX file size (includes
operator metadata); Table~\ref{tab:main}'s 6.1~MB is the
theoretical bit-packed weight size ($24.3/4$), a distinct
quantity.\par}
\end{table}

Table~\ref{tab:int8} documents a stark gap between BOPs-predicted and
measured INT8 efficiency.
W8A8 single-pass BOPs predict a 16$\times$ speedup over FP32; ORT
INT8 without TensorRT delivers a single-pass latency of
\textbf{4\,598~ms} against 31.2~ms for PyTorch FP16---a
$\approx$\textbf{147$\times$ slowdown} (Fig.~\ref{fig:int8}).
The root cause is diagnosed as follows.
TensorRT libraries are absent from the Kaggle T4 environment
(\texttt{libnvinfer.so.10 not found}), preventing ONNX Runtime from
dispatching quantized operators to the GPU's INT8 tensor cores.
As a result, the quantized ONNX graph contains 192
GPU$\leftrightarrow$CPU memcpy nodes (one per quantized operator),
each incurring a PCIe round-trip; additionally, UNet downsample
\texttt{Slice} operations resist dynamic INT8 quantization and
execute at FP32 within the nominally INT8 graph.

This is a software-stack finding, not a verdict on INT8 arithmetic: the $3.8\times$ disk-size reduction (6.4 vs.\ 24.5~MB) confirms weight quantization succeeds; the bottleneck is inference-stack support, not arithmetic capability.

\begin{figure}[t]
  \centering
  \includegraphics[width=\columnwidth]{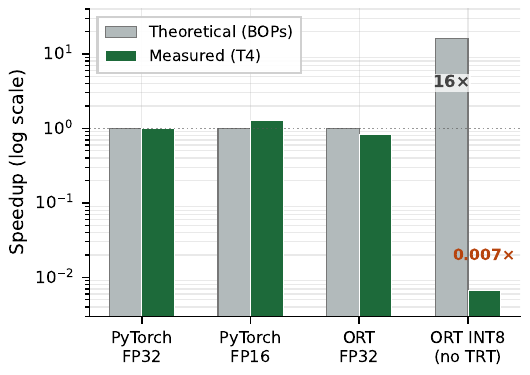}
  \caption{Theoretical speedup (BOPs ratio, grey) vs.\ measured ONNX
    Runtime speedup on T4 without TensorRT (log scale).
    W8A8 BOPs predict 16$\times$; ORT INT8 delivers
    $0.007\times$ ($\approx$147$\times$ slowdown vs.\ FP16) due to
    192 operator-fallback memcpy nodes.}
  \label{fig:int8}
\end{figure}

\subsection{Quality, Gap Fidelity, and GAMP}
\label{sec:results}

\begin{table}[t]
\caption{Quality and efficiency sweep, CIFAR-10 (50k FID, DDIM
$K\!=\!50$, timestep-adaptive guidance (TAG) $w\!=\!4.0$~\cite{dash2026},
single seed).
BOPs: per guided step ($=\!2\!\times$ single-pass).
Size: FP32 24.3~MB; W8 6.1~MB ($4\!\times$); W4 3.0~MB
($8\!\times$).
$\rho,\cos(\Dc)\!\to\!1$ and lower FID are better.
${}^\dagger$MSE-W8A8 FID/IS marginally exceed FP32 owing to
single-seed variance ($\pm$2--3 FID at 50k samples) and mild
AdaRound regularization; FP32 remains the quality ceiling.}
\label{tab:main}
\centering
\resizebox{\columnwidth}{!}{%
\renewcommand{\arraystretch}{1.22}
\begin{tabular}{lrrrrrr}
\toprule
Config & avg$A$ & $\rho$ & $\cos(\Dc)$ &
  FID$\downarrow$ & IS$\uparrow$ & BOPs\,(G) \\
\midrule
FP32                   & 32 & 1.000 & 1.000 & 13.67 & 8.61 & --   \\
\midrule
RTN-W8A8               &  8 & 1.501 & 0.612 & 48.31 & 7.18 & 65.4 \\
MSE-W8A8${}^\dagger$   &  8 & 1.252 & 0.806 & \textbf{10.53} & 8.89 & 65.4 \\
\midrule
RTN-W4A8               &  8 & 1.622 & 0.580 & 65.42 & 6.92 & 32.7 \\
MSE-W4A8               &  8 & 1.334 & 0.730 & \textbf{30.62} & 7.98 & 32.7 \\
\midrule
RTN-W4A4               &  4 & 6.277 & 0.095 & 272.78 & 1.56 & 16.4 \\
MSE-W4A4               &  4 & 3.677 & 0.169 & 122.36 & 3.74 & 16.4 \\
\midrule
Gap-only-W4A8          &  8 & \textbf{1.004} & \textbf{0.882}
                           & 334.24 & 1.26 & 32.7 \\
\midrule
GAMP-$\bar{b}$6 (ours) & 6.03 & 1.509 & 0.603
                           & \textbf{39.40} & 7.42 & 31.3 \\
GAMP-$\bar{b}$5 (ours) & 5.01 & 2.151 & 0.435
                           & 40.01 & 7.49 & 28.3 \\
\bottomrule
\end{tabular}}
\end{table}

\textbf{Finding 1 --- Activation precision cliff.}
Reducing weight precision W8$\to$W4 at fixed A8 raises FID by 20.1
(10.5$\to$30.6).
Reducing activation precision A8$\to$A4 at fixed W4 raises FID by
91.8 (30.6$\to$122.4)---a 4.6$\times$ steeper per-bit degradation.
Activations are the precision-critical resource for CFG diffusion
models, directly motivating GAMP's strategy of fixing weight bits
and allocating activation bits non-uniformly.

\textbf{Finding 2 --- Branch-drift trap confirmed.}
Gap-only-W4A8 achieves the best guidance-gap alignment of any
quantized configuration ($\rho\!=\!1.004$, $\cos(\Dc)\!=\!0.882$)
yet the worst sample quality of the entire sweep: FID~334, exceeding
even the W4A4 model (FID~272).
This is the exact false positive predicted by
Corollary~\ref{cor:metrics}: the calibration objective is satisfied
near-perfectly while the guided prediction is severely
corrupted.
The gap optimizer reaches $\mathcal{L}_\text{gap}\!\approx\!0$,
consistent with a constant branch shift $\dv$; this drift compounds across 50 DDIM
steps, producing near-random samples despite near-ideal gap
diagnostics (Fig.~\ref{fig:quality}).
The near-ideal $\rho$ and $\cos(\Dc)$ combined with severely degraded FID
is the signature Corollary~\ref{cor:metrics} predicts for
a shared-drift solution; directly instrumenting
$\norm{\eps_\emptyset^q - \eps_\emptyset^{fp}}$ alongside the gap
diagnostics would further isolate the shared-drift mechanism from
other unconstrained error patterns and is a natural extension of
this ablation.

\textbf{Finding 3 --- GAMP Pareto efficiency.}
GAMP-$\bar{b}$6 achieves FID~39.4 at 6.03 average activation bits
and 31.3~G BOPs per guided step.
A linear reference connecting the two measured uniform endpoints
(MSE-W4A4: FID~122.4; MSE-W4A8: FID~30.6) predicts
FID~$\approx$76.5 at 6 average bits; GAMP achieves 39.4, a
\textbf{49\% improvement} over this uniform-spending reference
at matched average precision.
GAMP-$\bar{b}$5 (28.3~G BOPs, 13\% below W4A8) reaches
FID~40.0---near-W4A8 quality at lower compute; the same reference
at avg$A\!=\!5$ predicts FID~$\approx$99, confirming that
guided-output sensitivity allocation substantially outperforms
uniform precision at matched average bits.
The near-identical FIDs at $\bar{b}\!\in\!\{5,6\}$ (39.4 vs.\ 40.0)
reflect rapid saturation: the most sensitivity-weighted layers are
promoted to A8 first, capturing the bulk of guided-output error
reduction within the first knapsack increments (Fig.~\ref{fig:pareto}).
Notably, GAMP achieves lower FID despite weaker gap scores
($\rho\!=\!1.509$, $\cos(\Dc)\!=\!0.603$), reinforcing that
guided-output error, not gap fidelity, governs sample quality.

\begin{figure}[t]
  \centering
  \includegraphics[width=\columnwidth]{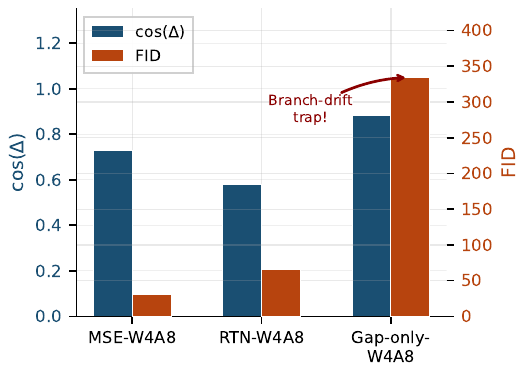}
  \caption{Guidance-gap fidelity ($\cos(\Dc)$, blue) vs.\ FID
    (orange) across W4A8 configurations.
    Gap-only achieves the highest $\cos(\Dc)$ yet the worst FID,
    empirically confirming Corollary~\ref{cor:metrics}.
    GAMP recovers near-MSE-W4A8 quality at lower BOPs.}
  \label{fig:quality}
\end{figure}

\begin{figure}[t]
  \centering
  \includegraphics[width=\columnwidth]{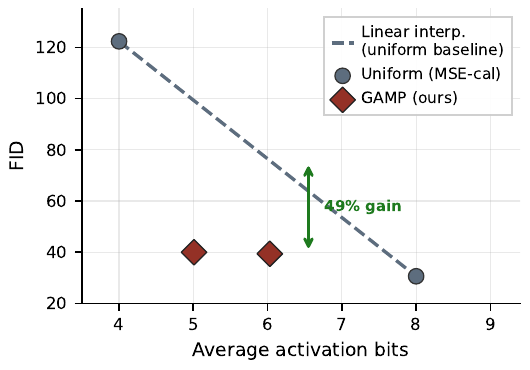}
  \caption{GAMP Pareto curve: FID vs.\ average activation bits.
    GAMP (diamonds) lie well below the linear-spending reference
    (dashed) connecting MSE-W4A4 and MSE-W4A8, demonstrating that
    guided-output sensitivity allocation substantially outperforms
    uniform spending at both tested average bit-widths.}
  \label{fig:pareto}
\end{figure}

\section{Conclusion}
\label{sec:conclusion}

This paper characterizes post-training quantization of CFG diffusion
models along three dimensions the existing literature has left
unaddressed.
Measured profiling on T4 confirms a consistent 1.99$\times$
guided-inference overhead hidden by all standard efficiency metrics.
A precise software-stack diagnosis isolates the 147$\times$ INT8
slowdown to an operator-fallback artifact rather than an arithmetic
limitation, clarifying that the BOPs-to-latency gap is a
deployment-pipeline problem.
The branch-drift trap---an exact null space in gap-only PTQ calibration, confirmed at ($\cos(\Dc)\!=\!0.882$, FID~334)---is the paper's central theoretical and empirical contribution.
GAMP closes this trap by construction and allocates activation
bits by guided-output sensitivity, achieving a 49\% FID improvement
over the uniform-spending reference at matched average precision
($\bar{b}$6) and near-W4A8 quality at 13\% fewer BOPs than the
uniform W4A8 counterpart ($\bar{b}$5).

For HPEC practitioners, three guidelines follow: report guided-step BOPs; validate PTQ calibration with guided-prediction error rather than gap diagnostics alone; and allocate activation bits by GAMP sensitivity when the activation cliff falls within the compression regime.

\textit{Limitations and future work.}
Experiments are restricted to $32\!\times\!32$ pixel-space diffusion
(CIFAR-10) on a compact 6.1M student; validating GAMP on Stable
Diffusion and similar latent CFG models, and on
Hopper-generation accelerators (e.g., H100) where INT8 tensor-core
support is more mature, is the primary scale-up direction.
The branch-drift analysis predicts the trap intensifies with $w$,
motivating ablations at guidance scales $w\!\sim\!7$--15 typical
of text-to-image generation.
Adding an unconditional anchor
$\lambda\norm{\eps_\emptyset^q - \eps_\emptyset^{fp}}^2$ to
$\mathcal{L}_\text{gap}$ closes the null space provably and is
the most direct methodological extension.
Combining GAMP with PTQD-style per-step allocation is the primary algorithmic follow-up.


\end{document}